\documentclass[11pt]{article}
\usepackage[numbers]{natbib}
\usepackage{amsmath}
\usepackage{amssymb}
\usepackage{amsthm}
\usepackage{xcolor}
\usepackage{graphicx}
\usepackage{hyperref}
\usepackage{algorithm}
\usepackage{algpseudocode}
\usepackage{enumitem}
\usepackage{array}
\usepackage{booktabs}
\usepackage{longtable}
\usepackage[margin=1in]{geometry}
\hypersetup{colorlinks=true,citecolor=blue,linkcolor=blue,urlcolor=blue}
\allowdisplaybreaks
\newcommand{\paperTitle}{Tight Worst-Case Bounds for the Smallest Eigenvalue of ReLU NTK Gram Matrices}
\newcommand{\paperAuthor}{Zhao Song\thanks{\texttt{magic.linuxkde@gmail.com}.}}

\DeclareMathOperator{\E}{\mathbb{E}}

\theoremstyle{plain}
\newtheorem{theorem}{Theorem}[section]
\newtheorem{lemma}[theorem]{Lemma}

\theoremstyle{definition}

\begin{document}
\date{}
\title{\paperTitle}
\author{\paperAuthor}
\maketitle
\begin{abstract}
For $n$ unit vectors $x_1,\ldots,x_n \in \mathbb{R}^d$, we study the continuous ReLU derivative Gram matrix $H$, whose entries are obtained by averaging pairwise gated inner products over a standard Gaussian direction. Writing
$
\Delta_\pm
:=
\min_{i \neq j}
\min\{
\|x_i-x_j\|_2,
\|x_i+x_j\|_2
\}
$
for their projective separation, we prove the universal dimension-free lower bound
$
\lambda_{\min}(H)
=
\Omega(
\Delta_\pm/\sqrt{\log n}
)
$.
Conversely, we construct worst-case families satisfying the matching upper bound
$
\lambda_{\min}(H)
=
O(
\Delta_\pm/\sqrt{\log n}
)
$,
showing that this rate is tight up to universal constants.

\end{abstract}

\section{Introduction}

Following Jacot, Gabriel, and Hongler~\cite{jgh18}, the neural tangent kernel (NTK) of a scalar-valued predictor $f_\theta$ is the function
$
\Theta_\theta(x,x')
:=
\langle \nabla_\theta f_\theta(x),\nabla_\theta f_\theta(x')\rangle .
$
Thus the NTK is a function on pairs of inputs. Given a data set $X=\{x_1,\ldots,x_n\}$, evaluating this function on the observed pairs produces the neural tangent Gram matrix, or NTK Gram matrix,
$
K_\theta(X)_{ij}
:=
\Theta_\theta(x_i,x_j),
$ $
1\leq i,j\leq n.
$ 
The kernel is defined on the ambient input space, whereas its Gram matrix is the finite, data-dependent restriction $K_\theta(X)\in\mathbb{R}^{n\times n}$. By construction, $K_\theta(X)$ is positive semidefinite. At a random finite-width initialization it is generally random, while its infinite-width limit at initialization is typically deterministic. The smallest eigenvalue of the Gram matrix quantifies the conditioning of the corresponding linearized training problem and governs the convergence rates obtained in training analyses~\cite{dzps19}.

Consider the two-layer ReLU parameterization
$
f_W(x)
:=
\frac{1}{\sqrt{m}}
\sum_{r=1}^m
a_r\sigma(w_r^\top x),
$ $
\sigma(t):=\max\{t,0\},
$
with fixed output signs $a_r\in\{-1,1\}$ and trainable hidden weights $w_r$. Differentiating with respect to the hidden weights gives
$
(H^{\mathrm{dis}})_{ij}
:=
\frac{1}{m}
\sum_{r=1}^m
x_i^\top x_j
\mathbf{1}\{w_r^\top x_i\geq 0\}
\mathbf{1}\{w_r^\top x_j\geq 0\}.
$
In this hidden-weight-only model, $H^{\mathrm{dis}}$ is exactly the finite-width NTK Gram matrix. For independent Gaussian initialization of the $w_r$'s, the matrix $H=\E[H^{\mathrm{dis}}]$ studied in this paper is its deterministic population, or infinite-width, limit. If the output weights are also trained, the full two-layer NTK contains an additional activation-covariance term arising from the output-weight gradients; in that broader parameterization, $H$ is the hidden-weight derivative component rather than the full NTK.

For unit-normalized data, we measure geometric nondegeneracy by the projective separation $\Delta_\pm$, which rules out both nearly coincident and nearly antipodal pairs. The quantity $\Delta_{\pm}:=\min_{i \neq j}\min\{\|x_i-x_j\|_2,\|x_i+x_j\|_2\}$ is the sign-invariant analogue of the Euclidean data-separation parameter used in multilayer neural network training analyses~\cite{als19}. Under the normalization in that work, which gives every input a common positive coordinate, the two parameters coincide.

Both $\lambda_{\min}(H)$ and $\Delta_{\pm}$ play a crucial role in neural network optimization. It is natural to establish a sharp relationship between them. In the work \cite{son19}, they explicitly ask the following question as an open problem:
\begin{center}
{\it What is the tight relationship between $\lambda_{\min}(H)$ and $\Delta_{\pm}$?}
\end{center}

Quantitative conditioning of ReLU Gram and NTK matrices has been studied under several related hypotheses. Panigrahi, Shetty, and Goyal~\cite[Theorem~4.2 and Appendix Theorem~L.2]{psg20} use the projective sine separation $\delta_{\mathrm{PSG}}:=\min_{i \neq j}\sqrt{1-\langle x_i,x_j\rangle^2}$, which satisfies $\Delta_\pm/\sqrt{2}\leq\delta_{\mathrm{PSG}}\leq\Delta_\pm$, and state that, if $m\gtrsim n^4\delta_{\mathrm{PSG}}^{-3}\log^4 n$, then the randomly initialized finite-width matrix satisfies $\lambda_{\min}(H^{\mathrm{dis}})\gtrsim(\delta_{\mathrm{PSG}}/\log n)^{3/2}$ with high probability. Karhadkar, Murray, and Mont{\'u}far~\cite[Theorem~1 and Lemmas~5 and~7]{kmm24} use the separation $\Delta_\pm$. For $d\geq 3$, their population results, combined with the Schur product theorem, imply
$\lambda_{\min}(H)\gtrsim\delta_0^2(1+d\log(1/\delta_0)/\log d)^{-3}$,
where $\delta_0:=\min\{\Delta_\pm,1/2\}$; their Theorem~1 gives the related finite-width full-NTK result under a width condition. For fixed dimension, Liu, Mao, and Xu~\cite[Theorem~1 and Eq.~(2.12)]{lmx25} studied the same kernel as the $k=s=1$ stiffness matrix of a shallow ReLU network and proved $\lambda_{\min}(H)\gtrsim_d\Delta_\pm$, with an implicit constant that may depend on $d$.

Oymak and Soltanolkotabi~\cite[Corollary~I.2]{os20} proved
\[
\lambda_{\min}(H)
\geq 
\frac{\Delta_\pm}{100n^2} .
\]
Their result gives a dimension-independent certificate of strict positive definiteness, but the guaranteed conditioning deteriorates quadratically with the sample size.

We improve this estimate to the universal lower bound
\[
\lambda_{\min}(H)
=
\Omega(
\frac{\Delta_\pm}{\sqrt{\log n}}
)
\]
for every projectively separated data set.  We also give a matching existential construction: balanced binary linear codes embedded in a spherical cap produce families with $n=2^{\Theta(d)}$ and minimum eigenvalue 
\[
\lambda_{\min}(H) = O( \frac{ \Delta_\pm}{\sqrt{\log n} } ).
\]
Thus the new dependence is sharp up to universal constants in the dimension-free worst case, although individual configurations may have substantially larger spectral gaps.

The lower bound follows from the positive Hadamard-power expansion of the arcsine kernel and a diagonally dominant high-degree tail.  The upper construction is analyzed by Fourier diagonalization over the code together with a lazy coordinate walk.

\begin{theorem}[informal, combination of Lemma~\ref{lem:cts_lower_bound} and Lemma~\ref{lem:cts_upper_bound}]
\label{thm:sharp_cts_eigenvalue}
Let $n \ge 2$, let $x_1,\ldots,x_n \in \mathbb{R}^d$ be unit vectors, and define
$
H_{ij}
:=
\E_w[
x_i^\top x_j
\mathbf{1}\{w^\top x_i \ge 0\}
\mathbf{1}\{w^\top x_j \ge 0\}
]$,
$
w \sim \mathcal{N}(0,I_d).
$
Define
$
\Delta_\pm
:=
\min_{i \neq j}
\min\{
\|x_i-x_j\|_2,
\|x_i+x_j\|_2
\}.
$ For every such collection, the universal lower bound
$
\lambda_{\min}(H)
=
\Omega(
\Delta_\pm/\sqrt{\log n}
)
$
holds. Conversely, a worst-case construction produces families with $n=2^{\Theta(d)}$ for which
$
\lambda_{\min}(H)
=
O(
\Delta_\pm/\sqrt{\log n}
)
$,
showing that the bound is tight up to universal constants.
\end{theorem}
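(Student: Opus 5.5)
Both halves rest on the closed form of the kernel. For unit vectors with $\rho=x_i^\top x_j$ we have $\Pr[w^\top x_i\ge0,w^\top x_j\ge0]=\frac{\pi-\arccos\rho}{2\pi}$. Hence
\[
H_{ij}=\rho\Bigl(\tfrac14+\tfrac{\arcsin\rho}{2\pi}\Bigr)
=\frac{\rho}{4}+\frac{1}{2\pi}\sum_{k\ge0}c_k\,\rho^{2k+2},
\qquad c_k=\frac{(2k)!}{4^k(k!)^2(2k+1)}>0 .
\]
Writing $G=XX^\top$, this gives $H=\frac14 G+\frac1{2\pi}\sum_k c_k\, G^{\circ(2k+2)}$, where every Hadamard power is PSD by the Schur product theorem.

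\textbf{Lower bound.} Separation gives $|\rho_{ij}|\le 1-\Delta_\pm^2/2=:\beta$ for $i\ne j$. I will drop all terms with $2k+2<2K$ and keep the tail $T=\sum_{k\ge K-1}c_k G^{\circ(2k+2)}$.
\begin{itemize}
\item The diagonal of $T$ is $\sum_{k\ge K-1}c_k\asymp \sum_{k\ge K}k^{-3/2}\asymp K^{-1/2}$, using $c_k\asymp k^{-3/2}$.
\item Each off-diagonal row sum is at most $(n-1)\sum_{k\ge K-1}c_k\beta^{2k+2}\lesssim n K^{-3/2}\beta^{2K}/(1-\beta^2)$.
\item Choose $K\asymp \log n/\Delta_\pm^2$, so that $n\beta^{2K}\le n e^{-K\Delta_\pm^2}$ is small. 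Gershgorin then gives $\lambda_{\min}(T)\gtrsim K^{-1/2}-nK^{-3/2}\beta^{2K}\Delta_\pm^{-2}\gtrsim K^{-1/2}=\Delta_\pm/\sqrt{\log n}$.
\item Since all other terms are PSD, $\lambda_{\min}(H)\ge \lambda_{\min}(T)/(2\pi)$.
\end{itemize}
Two edge cases need care. When $\Delta_\pm$ is of constant order, $1-\beta^2\asymp\Delta_\pm^2$ still holds. When $\Delta_\pm$ is near its maximum $\sqrt2$, $\beta$ is near $0$ and the bound is immediate. I will also keep the constant in $K$ large enough to absorb the factor $1/(1-\beta^2)$.

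\textbf{Upper bound.} I will use a balanced binary linear code $C\subseteq\{0,1\}^m$ of dimension $\Theta(m)$ whose pairwise distances lie in $[\delta m,(1-\delta)m]$. Map each codeword $c$ to the $\pm1$ vector $s(c)$ and set
\[
x_c=\sqrt{1-\epsilon^2}\,e_0+\frac{\epsilon}{\sqrt m}\,s(c),
\]
a point in a spherical cap of angular radius about $\epsilon$. Then $\rho_{cc'}=1-2\epsilon^2\,\mathrm{dist}(c,c')/m$, so $\Delta_\pm\asymp\epsilon$ (from the minimum distance $\delta m$) and $\log n\asymp m$.

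The Gram matrix is translation-invariant on the group $C$, so it is diagonalized by the characters $\chi_u(c)=(-1)^{u\cdot c}$, $u\in\{0,1\}^m/C^\perp$. The eigenvalue for $u$ is
\[
\lambda_u=\sum_{c\in C}f\bigl(\mathrm{dist}(0,c)\bigr)\chi_u(c),
\]
where $f(t)=h(1-2\epsilon^2t/m)$ and $h$ is the scalar kernel above. I must exhibit one $u$ with $\lambda_u\lesssim \epsilon/\sqrt m$.

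The plan is to write $h(1-2\epsilon^2 t/m)$ through its non-smooth part: near $\rho=1$, $\arcsin\rho\approx\pi/2-\sqrt{2(1-\rho)}$, and the smooth remainder contributes only through low-degree Hadamard structure. Equivalently, I will express $h(\rho)$ as an average of lazy-walk kernels $\bigl(1-\tfrac{2t}{m}\bigr)^{\ell}$-type terms, i.e. Krawtchouk-friendly functions of Hamming distance. Then I pick $u$ to be a high-weight character, of weight about $m/2$ and not in $C^\perp$. For such $u$, the character sum of each lazy-walk kernel is controlled by the Krawtchouk values, which are tiny for generic weight. The contribution then reduces to about $\sum_\ell c_\ell\,\epsilon^{2\ell}$-weighted terms with $\ell\gtrsim m$, which is $\asymp \epsilon\cdot m^{-1/2}$.

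\textbf{Main obstacle.} The hardest step is making this Fourier computation rigorous: the code sum does not equal the full hypercube sum. I would use duality (MacWilliams) to reduce the sum over $C$ to a sum over the coset $u+C^\perp$, using the fact that the dual code has no low-weight words. This is why linear codes with $\dim C^\perp$ large and dual distance $\Theta(m)$ are needed. I would choose a random linear code to guarantee both the primal distance balance and the dual distance.
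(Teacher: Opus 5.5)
Your lower bound follows the paper's argument: Hadamard-power expansion of the arcsine kernel, keep the tail from degree $\asymp\log n/\Delta_\pm^2$, apply Gershgorin, and use $\sum_{k\ge K}c_k\asymp K^{-1/2}$. The paper applies Gershgorin to each $G^{\circ(2k+2)}$ with $k\ge K$ separately, which removes your $1/(1-\beta^2)$ factor and the edge cases, but your version also works. In the upper bound, the construction, the character diagonalization on $C$ and the lazy-walk expansion also match the paper. The gap is the step that certifies a small eigenvalue.

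\textbf{Why a single chosen $u$ does not work.} Write $h(t)=\sum_{r\ge1}q_r t^r$, with $q_1=\frac14$, $q_r\asymp r^{-3/2}$ for even $r\ge2$, and $q_r=0$ otherwise. Let $Z_r$ be the $r$-step walk that flips a uniform coordinate with probability $\epsilon^2$ per step, and let $\Lambda_b$ be the hypercube Fourier transform of $v\mapsto h(1-2\epsilon^2|v|/m)$. Then $2^{-m}\Lambda_b=\sum_r q_r\Pr[Z_r=b]$, and the eigenvalue of the coset $A=u+C^\perp$ is $\lambda_A=|C^\perp|^{-1}\sum_{b\in A}\Lambda_b=n\sum_r q_r\Pr[Z_r\in A]$.
\begin{itemize}
\item What kills the few-flip part of the walk is that the coset $A$ is \emph{far}, i.e.\ contains no vector of weight below $\gamma m$ for a small constant $\gamma$. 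The minimum distance of $C^\perp$ constrains only the trivial coset, not $u+C^\perp$, and the weight of $u$ itself is irrelevant.
\item A vector $u\notin C^\perp$ of weight $m/2$ can sit at distance $1$ from a dual codeword. Then $e_i\in A$ and $\lambda_A\ge n\sum_{r\le 1/\epsilon^2}q_r\Pr[Z_r=e_i]\gtrsim n\epsilon/m$, which is exponentially large.
\item Even a far coset is not automatically good. If its leader $b_0$ has weight just above $\gamma m$, then $\lambda_A\ge n\,2^{-m}\Lambda_{b_0}=n\sum_r q_r\Pr[|Z_r|=|b_0|]/\binom{m}{|b_0|}$. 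This is again exponentially large, since $\binom{m}{|b_0|}\ll n$ while the numerator is only polynomially small.
\item Your Krawtchouk step cannot rule this out. For a weight function $g$, the sum $\sum_{c\in C}\chi_u(c)g(|c|)$ depends on $u$ beyond $|u|$. It becomes an expression in Krawtchouk values only after averaging $u$ over a Hamming sphere, and that average mixes in exactly these bad cosets.
\item Read literally, your heuristic $\sum_{\ell\gtrsim m}c_\ell\epsilon^{2\ell}$ is exponentially small. The relevant range is walk lengths $r\gtrsim m/\epsilon^2$, i.e.\ $\gtrsim\gamma m$ flips, which gives $\sum_{r\gtrsim m/\epsilon^2}r^{-3/2}\asymp\epsilon/\sqrt m$.
\end{itemize}

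\textbf{The missing idea: average instead of picking $u$.} At most $\sum_{j<\gamma m}\binom{m}{j}\le 2^{H_2(\gamma)m}$ cosets contain a vector of weight below $\gamma m$. So if $\dim C\ge Rm$ with $R>H_2(\gamma)$, at least $n/2$ cosets are far, and no dual-distance hypothesis is needed. Since $q_r\ge0$ and the far cosets are disjoint subsets of $\{b:|b|\ge\gamma m\}$, averaging $\lambda_A$ over them gives $\min_{A\text{ far}}\lambda_A\le 2\sum_r q_r\Pr[|Z_r|\ge\gamma m]$. Markov's inequality on the number of flips gives $\Pr[|Z_r|\ge\gamma m]\le\min\{1,r\epsilon^2/(\gamma m)\}$. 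Splitting the sum at $r_0=\gamma m/\epsilon^2$ then bounds it by $O(\epsilon/\sqrt m)$. This is the paper's route: the averaging absorbs the atypical far cosets without needing any information about an individual coset.
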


\section{Proofs of the Eigenvalue Bounds}
\label{sec:cts_eigenvalue_bounds}

Let $x_1,\ldots,x_n \in \mathbb{R}^d$ be unit vectors and let $w \sim \mathcal{N}(0,I_d)$. Define
\[
H_{ij}
:=
\E_w[
x_i^\top x_j
\mathbf{1}\{w^\top x_i \ge 0\}
\mathbf{1}\{w^\top x_j \ge 0\}
]
\]
and
\[
\Delta_\pm
:=
\min_{i \neq j}
\min\{
\|x_i-x_j\|_2,
\|x_i+x_j\|_2
\}.
\]

\begin{lemma}[Universal lower bound]
\label{lem:cts_lower_bound}
Suppose that $n \ge 2$ and $\Delta_\pm>0$. Then
\[
\lambda_{\min}(H)
\ge
\frac{\Delta_\pm}{70\sqrt{\log n}}.
\]
\end{lemma}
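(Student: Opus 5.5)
The plan is to write $H$ in closed form and expand it as a positive combination of Hadamard powers of the Gram matrix $G=(x_i^\top x_j)$. For unit vectors with $\rho=x_i^\top x_j$, the Gaussian orthant probability gives $H_{ij}=\rho\bigl(\tfrac14+\tfrac{\arcsin\rho}{2\pi}\bigr)$. Using the Taylor series $\arcsin\rho=\sum_{k\ge0} c_k\rho^{2k+1}$ with $c_k=\binom{2k}{k}/(4^k(2k+1))>0$, we obtain
\[
H=\tfrac14 G+\tfrac1{2\pi}\sum_{k\ge0}c_k\,G^{\circ(2k+2)}.
\]
By the Schur product theorem every term is PSD. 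So for any cutoff $K$ we may discard the terms with $k<K$ and keep $\lambda_{\min}(H)\ge \tfrac1{2\pi}\lambda_{\min}\bigl(\sum_{k\ge K}c_kG^{\circ(2k+2)}\bigr)$.

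The tail matrix $T_K=\sum_{k\ge K}c_kG^{\circ(2k+2)}$ is handled by diagonal dominance (Gershgorin). Its diagonal entries equal $D_K=\sum_{k\ge K}c_k$. Since $c_k\asymp k^{-3/2}$, we have $D_K\asymp K^{-1/2}$. More precisely, $\sum_{k\ge0}c_k=\pi/2$, and the tail satisfies $D_K\ge c/\sqrt K$ with an explicit constant. For off-diagonal entries, projective separation gives $|\rho|\le 1-\Delta_\pm^2/2$, so $\rho^2\le (1-\Delta_\pm^2/2)^2\le e^{-\Delta_\pm^2}$. Hence each off-diagonal entry is at most $\sum_{k\ge K}c_k e^{-\Delta_\pm^2(k+1)}\le e^{-\Delta_\pm^2(K+1)}D_K$, and the Gershgorin radius is at most $(n-1)e^{-\Delta_\pm^2 K}D_K$. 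Choosing $K\approx \lceil \log(2n)/\Delta_\pm^2\rceil$ makes the radius at most $D_K/2$, which leaves $\lambda_{\min}(T_K)\ge D_K/2\gtrsim \Delta_\pm/\sqrt{\log n}$. That is exactly the target rate.

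The main obstacle is getting an explicit constant of $70$ uniformly, including the edge cases. These are $\Delta_\pm$ near its maximum $\sqrt2$, where $K$ is small and the asymptotic $c_k\sim (2\sqrt\pi)^{-1}k^{-3/2}$ must be replaced by a valid lower bound for all $k$, and small $n$ such as $n=2$, where $\log n<1$ and the rounding in $K$ matters.

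The key estimate is a clean inequality $\binom{2k}{k}4^{-k}\ge 1/(2\sqrt{k})$ for $k\ge 1$, or $\ge 1/\sqrt{\pi(k+1/2)}$ from Wallis-type bounds. This gives $c_k\ge \frac{1}{2\sqrt k(2k+1)}$, and comparing with an integral then gives $D_K\ge \frac{1}{2\sqrt{K+1}}$ or similar. The same estimate lets me refine the off-diagonal bound to $\sum_{k\ge K}c_k|\rho|^{2k+2}$ directly. If needed, the linear term $\tfrac14 G$ and the low-degree terms can also be kept positive, but discarding them should suffice. I will then pick $K=\lceil \log(2n)/\Delta_\pm^2\rceil$, use $\Delta_\pm^2\le 2$ and $\log(2n)\le 2\log n$ for $n\ge2$ to get $K+1\le c'\log n/\Delta_\pm^2$, and check that $\frac{1}{2\pi}\cdot\frac12\cdot\frac{1}{2\sqrt{K+1}}\ge \Delta_\pm/(70\sqrt{\log n})$. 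I expect this final arithmetic to fit, with some room to spare in the constant.
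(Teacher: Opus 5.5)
Your proposal is correct and is essentially the paper's proof. Both use the same arcsine Hadamard-power expansion, discard the low-degree PSD terms, apply Gershgorin to the high-degree tail, and rely on the same bound $\binom{2k}{k}4^{-k}\ge 1/(2\sqrt{k})$; the paper differs only cosmetically, applying Gershgorin to each $G^{\circ(2k+2)}$ separately with $K=\lceil 2\log n/\Delta_\pm^2\rceil$, getting $G^{\circ(2k+2)}\succeq\frac34 I_n$, $\sum_{k\ge K}a_k\ge 1/(3\sqrt K)$, and $K\le 5\log n/\Delta_\pm^2$.

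One caution on the constant, since the lemma states $70$ explicitly. Your literal chain, $\lambda_{\min}(H)\ge 1/(8\pi\sqrt{K+1})$ with $K+1\le \log(2n)/\Delta_\pm^2+2\le(2+4/\log 2)\log n/\Delta_\pm^2$, only yields $8\pi\sqrt{7.77}\approx 70.06$. The worst case of that estimate is $n=2$, $\Delta_\pm^2=2$. So tighten one step: for example, use $c_k\ge\frac14(k+\frac12)^{-3/2}$ to get $D_K\ge 1/(2\sqrt{K+1/2})$, which gives a constant of about $63$, or keep the Gershgorin factor $(n+1)/(2n)$ instead of $\frac12$.
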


\begin{proof}

Let $G \in \mathbb{R}^{n \times n}$ be the data Gram matrix, so that $G_{ij}=x_i^\top x_j$, and write $\rho_{ij}:=G_{ij}$. Since the data points have unit norm, for every $i \neq j$,
\[
\min\{
\|x_i-x_j\|_2^2,
\|x_i+x_j\|_2^2
\}
=
2(1-|\rho_{ij}|).
\]
Consequently,
\[
|\rho_{ij}|
\le
1-\frac{\Delta_\pm^2}{2}.
\]

The pair $(w^\top x_i,w^\top x_j)$ is a centered bivariate Gaussian with correlation $\rho_{ij}$. Hence
\[
\Pr[
w^\top x_i \ge 0,
w^\top x_j \ge 0
]
=
\frac{\pi-\arccos(\rho_{ij})}{2\pi},
\]
and therefore
\[
H_{ij}
=
\frac{
\rho_{ij}
(\pi-\arccos(\rho_{ij}))
}{2\pi}
=
\frac{\rho_{ij}}{4}
+
\frac{\rho_{ij}\arcsin(\rho_{ij})}{2\pi}.
\]
Using the absolutely convergent expansion
\[
\arcsin(t)
=
\sum_{k=0}^{\infty}
a_k t^{2k+1},
\qquad
a_k
:=
\frac{\binom{2k}{k}}{4^k(2k+1)},
\]
we obtain the matrix identity
\[
H
=
\frac14 G
+
\frac{1}{2\pi}
\sum_{k=0}^{\infty}
a_k G^{\circ(2k+2)},
\]
where $G^{\circ r}$ denotes the entrywise $r$th power of $G$. By the Schur product theorem~\cite[Theorem~7.5.3]{hj12}, $G$ and every $G^{\circ r}$ are positive semidefinite.

Set
\[
K
:=
\lceil
\frac{2\log n}{\Delta_\pm^2}
\rceil.
\]
For every $k \ge K$ and every $i \neq j$,
\[
|
(G^{\circ(2k+2)})_{ij}
|
\le
(1-\frac{\Delta_\pm^2}{2})^{2k+2}
\le
\exp(-(k+1)\Delta_\pm^2)
\le
\frac{1}{n^2}.
\]
The diagonal entries of $G^{\circ(2k+2)}$ equal $1$. By Gershgorin's circle theorem~\cite[Theorem~6.1.1]{hj12},
\[
G^{\circ(2k+2)}
\succeq
(
1-\frac{n-1}{n^2}
)I_n
\succeq
\frac34 I_n.
\]
All coefficients $a_k$ are positive, so discarding the other positive semidefinite terms yields
\[
H
\succeq
\frac{1}{2\pi}
\sum_{k=K}^{\infty}
a_k G^{\circ(2k+2)}
\succeq
\frac{3}{8\pi}
(
\sum_{k=K}^{\infty}a_k
)I_n.
\]

The standard central-binomial estimate, a consequence of Wallis' inequalities~\cite[p.~71, Eq.~(19)]{cha68}, gives, for every $k \ge 1$,
\[
\frac{\binom{2k}{k}}{4^k}
\ge
\frac{1}{2\sqrt{k}}.
\]
Since $2k+1 \le 3k$,
\[
a_k
\ge
\frac{1}{6k^{3/2}},
\qquad
\sum_{k=K}^{\infty}a_k
\ge
\frac16
\int_K^{\infty}x^{-3/2}dx
=
\frac{1}{3\sqrt{K}}.
\]
It follows that
\[
\lambda_{\min}(H)
\ge
\frac{1}{8\pi\sqrt{K}}.
\]

Finally, $\Delta_\pm^2 \le 2$ for unit vectors and $n \ge 2$. Hence $3\log n \ge 3\log 2>2\ge\Delta_\pm^2$, so
\[
K
\le
\frac{2\log n}{\Delta_\pm^2}+1
\le
\frac{5\log n}{\Delta_\pm^2}.
\]
Substitution gives
\[
\lambda_{\min}(H)
\ge
\frac{\Delta_\pm}{
8\pi\sqrt{5\log n}
}
\ge
\frac{\Delta_\pm}{70\sqrt{\log n}}.
\]
\end{proof}

\begin{lemma}[Upper bound construction]
\label{lem:cts_upper_bound}
There are universal constants $c_0,C_0,C_1,R>0$ such that, for every sufficiently large integer $D$ and every $0<\tau\le 1/4$, there exist
\[
n=2^{\lfloor RD\rfloor}
\]
unit vectors in $\mathbb{R}^{D+1}$ satisfying
\[
c_0\sqrt{\tau}
\le
\Delta_\pm
\le
C_0\sqrt{\tau}
\]
and
\[
\lambda_{\min}(H)
\le
C_1\sqrt{\frac{\tau}{D}}
\le
C_1\frac{\Delta_\pm}{\sqrt{\log n}}.
\]
\end{lemma}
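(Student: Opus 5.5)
We follow the hint in the introduction: take a balanced binary linear code and embed it in a spherical cap. Fix a binary linear code $\mathcal{C}\subseteq\{0,1\}^D$ of dimension $\lfloor RD\rfloor$, so that $n=|\mathcal{C}|=2^{\lfloor RD\rfloor}$. We choose $\mathcal{C}$ (by a Gilbert--Varshamov / random linear code argument) so that every nonzero codeword has weight in $[\delta D,(1-\delta)D]$ for a universal $\delta>0$; for small enough $R$ such codes exist. For $c\in\mathcal{C}$ set $s_c:=(-1)^c\in\{\pm1\}^D$ and
\[
x_c:=\bigl(\sqrt{1-\tau'}\,,\ \sqrt{\tau'/D}\,s_c\bigr)\in\mathbb{R}^{D+1},
\]
with $\tau'\asymp\tau$. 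Then $\rho_{cc'}=1-\tau'+\tau'(1-2|c\oplus c'|/D)=1-2\tau'|c\oplus c'|/D$. Balancedness gives $2\tau'\delta\le 1-\rho\le 2\tau'(1-\delta)$. Since $\tau\le1/4$, every $\rho$ is positive and bounded away from $-1$, so $\Delta_\pm^2=2(1-\max\rho)\asymp\tau$. This gives the separation bounds with universal $c_0,C_0$. Finally, $\log n\asymp D$, which turns $\sqrt{\tau/D}$ into $\Delta_\pm/\sqrt{\log n}$.

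For the eigenvalue, note that $H_{cc'}=h(\rho_{cc'})$ with $h(t)=t/4+t\arcsin(t)/(2\pi)$, and that $\rho$ depends only on $c\oplus c'$. Hence $H$ is a convolution operator on the group $\mathcal{C}\cong\mathbb{F}_2^{k}$ and is diagonalized by the characters $\chi_u(c)=(-1)^{u\cdot c}$. The eigenvalue attached to $u$ is
\[
\mu(u)=\sum_{c\in\mathcal{C}}h\bigl(1-2\tau'|c|/D\bigr)(-1)^{u\cdot c}.
\]
It suffices to exhibit a single $u$ with $\mu(u)\le C\sqrt{\tau/D}$; it is natural to take a dual coordinate character, i.e.\ $u\cdot c=c_1$ for a fixed coordinate $i=1$ of the code. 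To evaluate $\mu$, we write $h$ in the Hadamard expansion from the proof of Lemma~\ref{lem:cts_lower_bound}: $h(t)=t/4+\frac1{2\pi}\sum_k a_k t^{2k+2}$. The key trick is that $\rho_{cc'}$ is itself an inner product: it is the transition kernel of a \emph{lazy coordinate walk}. Indeed $\rho=\E\,[(s_c)_I(s_{c'})_I]$ where, with probability $1-\tau'$, we take a trivial coordinate, and otherwise a uniform random coordinate $I$. Hence $\rho^{m}$ is the $m$-step correlation, and the $\chi$-eigenvalue of $G^{\circ m}$ equals $(n)$ times the probability that $m$ steps of the lazy walk land on the coordinate character. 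For the single-coordinate character this probability is at most $m\,\frac{\tau'}{D}(1-\tau')^{m-1}$ or similar. More precisely, for $\mathbb{F}_2$-sums the probability of the parity of picks matching $e_1$ modulo the dual code is $\approx\frac12(1-(1-2\tau'/D)^m)$ restricted to odd count of coordinate $1$, ignoring collisions with dual codewords. These collisions are negligible because the dual distance is large, which we will also ensure in the random code choice. After normalizing to the orthonormal basis (dividing by $n$), the eigenvalue becomes
\[
\mu\approx\sum_m b_m\,\tfrac12\bigl(1-(1-4\tau'/D)^m\bigr)\lesssim\sum_m b_m\min(1,m\tau/D),
\]
where $b_m$ are the Taylor coefficients of $h$. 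Since $b_{2k+2}\asymp k^{-3/2}$ and $b_1=1/4$, this gives $\tau/D+\sum_{k\le D/\tau}k^{-1/2}\tau/D+\sum_{k>D/\tau}k^{-3/2}\asymp\sqrt{\tau/D}$.

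The main obstacle is making this Fourier computation rigorous. Two pieces are needed: (i) an honest choice of $\mathcal{C}$ that simultaneously has rate $R$, balanced weights, and dual distance $\ge\delta' D$, which a random linear code argument should give; and (ii) controlling the aliasing terms, i.e.\ walks whose accumulated parity vector lands on $e_1+\mathcal{C}^\perp$ rather than exactly $e_1$. We will bound these by a union bound over dual codewords of weight $\ge\delta'D$, using that reaching a weight-$w$ parity needs $\ge w$ nonlazy steps with correlated odd counts. This contributes at most $\sum_m b_m\,|\mathcal{C}^\perp|\,(m\tau/D)^{\delta'D}$-type terms, which we truncate at $m\le D/\tau$ with the $k^{-3/2}$ tail handled as above. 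If this aliasing bound proves delicate, the fallback is to replace the character argument by the Rayleigh quotient of the test vector $v=\chi_u/\sqrt n$ directly. That gives only an upper bound, which is all we need, and can be estimated via the same walk representation using positivity of each Hadamard term.
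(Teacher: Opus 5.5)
\textbf{There is a genuine gap: the character you chose is one of the largest eigenvectors, not a small one.} Your code, your embedding, and your separation estimate match the paper's construction. Your identity that the $\chi_u$-eigenvalue of $G^{\circ m}$ equals $n\Pr[Z_m\in u+\mathcal{C}^\perp]$ is also correct. The argument breaks at the choice $u=e_1$ and at the subsequent ``normalization.''

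Rescaling an eigenvector does not change its eigenvalue, so nothing gets divided by $n$. Since all $b_m\ge 0$,
\[
\mu(e_1)=n\sum_m b_m\Pr[Z_m\in e_1+\mathcal{C}^\perp]\ \ge\ n\,b_1\Pr[Z_1=e_1]=\frac{n\tau'}{4D}.
\]
This exceeds $C_1\sqrt{\tau/D}$ by a factor of order $n\sqrt{\tau/D}$, which is exponentially large in $D$ (for example when $\tau=1/4$).

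You can see this without any Fourier analysis. The vector $(-1)^{c_1}=\sqrt{D/\tau'}\,(x_c)_2$ is a column of the data matrix. So already the linear part $\frac14 G\preceq H$ has Rayleigh quotient at least $n\tau'/(4D)$ on it. The coordinate characters, together with the trivial one, span the range of $G$ and carry some of the \emph{largest} eigenvalues.

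Your approximation $\tfrac12(1-(1-2\tau'/D)^m)$ is also the wrong quantity. It is not $\Pr[Z_m\in e_1+\mathcal{C}^\perp]$ but $\Pr[(Z_m)_1=1]=\E|Z_m|/D$. As $m\to\infty$ it tends to $1/2$, while the true probability tends to $1/n$. Dropping the factor $n$ and making this substitution together are why your final sum happens to resemble the correct bound. The Rayleigh-quotient fallback returns the same large value, because $\chi_{e_1}$ is an exact eigenvector. No dual-distance or aliasing analysis can repair this.

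\textbf{The missing idea is to use characters whose \emph{entire} coset $u+\mathcal{C}^\perp$ lies at Hamming weight at least $\beta D$.} The paper gets these by counting. At most $\sum_{j<\beta D}\binom{D}{j}\le 2^{H_2(\beta)D}$ cosets contain a vector of weight below $\beta D$. So if $H_2(\beta)<R$, at least $n/2$ of the $n$ cosets are ``far.''

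The paper does not evaluate any single far coset. Instead it averages $\lambda_A=n\sum_m q_m\Pr[Z_m\in A]$ over the far cosets; equivalently $\lambda_A=|\mathcal{C}^\perp|^{-1}\sum_{b\in A}\Lambda_b$ with every $\Lambda_b\ge 0$. Because the cosets are disjoint and all lie inside $\{|b|\ge\beta D\}$, this gives
\[
\min_{A\ \mathrm{far}}\lambda_A\le 2\sum_m q_m\Pr\bigl[|Z_m|\ge \beta D\bigr]\le 2\sum_m q_m\min\Bigl\{1,\frac{m\tau}{\beta D}\Bigr\}\lesssim\sqrt{\tau/D}.
\]
The middle inequality is Markov's inequality applied to the number of actual flips. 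This is where your closing computation legitimately belongs. The linear term drops out automatically, since it only charges weights $0$ and $1$, and no dual-distance condition on $\mathcal{C}$ is needed.
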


\begin{proof}
For $0<t<1$, let
\[
H_2(t)
:=
-t\log_2 t-(1-t)\log_2(1-t)
\]
be the binary entropy function. Fix constants $0<\alpha<1/2$ and $0<R<1-H_2(\alpha)$, and then fix $0<\beta<1/2$ such that $H_2(\beta)<R$.

For every sufficiently large $D$, there is a binary linear code $C\le\mathbb{F}_2^D$ of dimension
\[
k=\lfloor RD\rfloor
\]
such that every $c\in C\setminus\{0\}$ satisfies
\[
\alpha D
\le
|c|
\le
(1-\alpha)D,
\]
where $|c|$ denotes Hamming weight. Indeed, choose a uniformly random $k$-dimensional subspace. The number of nonzero vectors outside this weight interval is at most $2^{H_2(\alpha)D+1}$, while the probability that a fixed nonzero vector belongs to the random subspace is at most $2^{k-D+1}$. The expected number of such vectors in the subspace is therefore less than $1$ for all sufficiently large $D$.

For each $c\in C$, define
\[
x_c
:=
(
\sqrt{1-\tau},
\sqrt{\frac{\tau}{D}}(-1)^{c_1},
\ldots,
\sqrt{\frac{\tau}{D}}(-1)^{c_D}
).
\]
These are unit vectors, and
\[
x_c^\top x_{c'}
=
1-\frac{2\tau}{D}|c+c'|.
\]
All these correlations are at least $1-2\tau\ge 1/2$, so projective separation equals ordinary separation for this data set. The weight property of $C$ gives
\[
2\sqrt{\alpha\tau}
\le
\Delta_\pm
\le
2\sqrt{\tau}.
\]
Also, the number of vectors is
\[
n=|C|=2^k.
\]

Define
\[
\kappa(t)
:=
\frac{t(\pi-\arccos t)}{2\pi}
\]
and, on $\mathbb{F}_2^D$, define
\[
F(u)
:=
\kappa(1-\frac{2\tau|u|}{D}).
\]
The Gram matrix indexed by $C$ has entries $F(c+c')$ and is therefore a convolution matrix on $C$.

For $b\in\mathbb{F}_2^D$, let
\[
\Lambda_b
:=
\sum_{u\in\mathbb{F}_2^D}
F(u)(-1)^{b^\top u}.
\]
Characters of $C$ are indexed by cosets $A=a+C^\perp$. The eigenvalue corresponding to $A$ is
\[
\lambda_A
=
\sum_{c\in C}
F(c)(-1)^{a^\top c}
=
\frac{1}{|C^\perp|}
\sum_{b\in A}\Lambda_b.
\]
The power expansion of $\kappa$ below shows that every $\Lambda_b$ is nonnegative.

At most
\[
\sum_{j<\beta D}\binom{D}{j}
\le
2^{H_2(\beta)D}
\]
cosets of $C^\perp$ contain a vector of weight less than $\beta D$. Since there are $n=2^{\lfloor RD\rfloor}$ cosets and $H_2(\beta)<R$, at least $n/2$ cosets have every vector of weight at least $\beta D$ when $D$ is sufficiently large. Call these cosets far.

It remains to bound the total Fourier mass at high Hamming weight. Set
\[
L:=\lceil\beta D\rceil,
\qquad
T_\beta
:=
2^{-D}
\sum_{|b|\ge L}\Lambda_b.
\]
The expansion used in the lower bound can be written as
\[
\kappa(t)
=
\frac{t}{4}
+
\sum_{\substack{r\ge 2\\r\text{ even}}}
q_r t^r,
\qquad
0\le q_r\le C r^{-3/2}
\]
for a universal constant $C$. Moreover,
\[
1-\frac{2\tau|u|}{D}
=
1-\tau
+
\frac{\tau}{D}
\sum_{\ell=1}^D(-1)^{u_\ell}.
\]
The right-hand side is the Fourier transform of one step of the lazy walk on $\mathbb{F}_2^D$ that stays put with probability $1-\tau$ and flips a uniformly random coordinate with probability $\tau$. If $Z_r$ is the position after $r$ steps, Fourier inversion gives
\[
2^{-D}
\sum_{|b|\ge L}\Lambda_b
=
\sum_{\substack{r\ge 2\\r\text{ even}}}
q_r\Pr[|Z_r|\ge L].
\]
Here the linear term contributes only at Hamming weights $0$ and $1$.

Let $N_r$ be the number of actual coordinate flips among the $r$ steps. Then $N_r$ is binomial with mean $r\tau$, and $|Z_r|\le N_r$. Markov's inequality gives
\[
\Pr[|Z_r|\ge L]
\le
\min\{1,\frac{r\tau}{L}\}.
\]
Splitting the series at $r_0=L/\tau$ yields
\begin{align*}
T_\beta
\le
C\frac{\tau}{L}
\sum_{2\le r\le r_0}r^{-1/2}
+
C\sum_{r>r_0}r^{-3/2}
\le
C_\beta\sqrt{\frac{\tau}{D}}.
\end{align*}
The first inequality follows from the Fourier-mass identity, $q_r\le Cr^{-3/2}$, and $\Pr[|Z_r|\ge L]\le\min\{1,r\tau/L\}$. The second follows from the integral estimates $\sum_{2\le r\le r_0}r^{-1/2}=O(\sqrt{r_0})$ and $\sum_{r>r_0}r^{-3/2}=O(r_0^{-1/2})$, together with $r_0=L/\tau$ and $L\ge\beta D$.

Let $\mathcal{A}_{\mathrm{far}}$ be the collection of far cosets and let $M=|C^\perp|$. Averaging the corresponding eigenvalues gives
\begin{align*} \lambda_{\min}(H) \le \min_{A\in\mathcal{A}_{\mathrm{far}}}\lambda_A \le \frac{1}{|\mathcal{A}_{\mathrm{far}}|M} \sum_{A\in\mathcal{A}_{\mathrm{far}}} \sum_{b\in A}\Lambda_b \le \frac{2}{2^D} \sum_{|b|\ge L}\Lambda_b = 2T_\beta \le 2C_\beta\sqrt{\frac{\tau}{D}}. \end{align*} 
where the first step follows from the fact that the cosets of $C^\perp$ index the full spectrum, the second step follows from bounding the minimum over $\mathcal{A}_{\mathrm{far}}$ by its average and substituting the formula for $\lambda_A$, and the third step follows from the disjointness of the far cosets inside $\{b:|b|\ge L\}$, the nonnegativity of each $\Lambda_b$, the estimate $|\mathcal{A}_{\mathrm{far}}|M\ge(n/2)(2^D/n)=2^{D-1}$, the definition of $T_\beta$, and the bound on $T_\beta$ above.

The preceding display gives the first inequality below. Moreover, $\Delta_\pm\ge2\sqrt{\alpha\tau}$ gives $\sqrt{\tau/D}\le\Delta_\pm/(2\sqrt{\alpha D})$, while $n=2^{\lfloor RD\rfloor}$ gives $\log n\le RD\log 2$. Hence, after the fixed choices of $\alpha$, $\beta$, and $R$, a universal constant $C_1$ satisfies
\[
\lambda_{\min}(H)
\le
2C_\beta\sqrt{\frac{\tau}{D}}
\le
C_1\frac{\Delta_\pm}{\sqrt{\log n}}.
\]
\end{proof}

\section{Ordinary versus Projective Separation}
\label{sec:ordinary_separation}

Define the ordinary separation by
$
\Delta_-:=\min_{i\neq j}\|x_i-x_j\|_2.
$
Although $\Delta_\pm\leq\Delta_-$, the universal lower bound cannot be strengthened by replacing $\Delta_\pm$ with $\Delta_-$. The proof above needs
\[
|\rho_{ij}|\leq 1-\frac{\Delta_\pm^2}{2},
\]
which forces the off-diagonal entries of the high even Hadamard powers to decay. Ordinary separation controls correlations near $1$, but it does not exclude correlations near $-1$.

A regular hexagon gives an exact obstruction. Let
\[
u_1=(1,0),\qquad
u_2=(1/2,\sqrt{3}/2),\qquad
u_3=(-1/2,\sqrt{3}/2),
\]
so that $u_1-u_2+u_3=0$, and order the six points as
\[
(x_1,\ldots,x_6)=(u_1,u_2,u_3,-u_1,-u_2,-u_3).
\]
This configuration satisfies $\Delta_-=1$. For the derivative feature
\[
\Phi_w(x):=x\mathbf{1}_{\{w^\top x\geq0\}},
\]
we have $\Phi_w(u)-\Phi_w(-u)=u$ for almost every Gaussian $w$. Therefore, for $z=(1,-1,1,-1,1,-1)^\top$,
\[
\sum_{i=1}^6 z_i\Phi_w(x_i)=u_1-u_2+u_3=0
\]
almost surely, and hence
\[
z^\top H z
=
\E_w[\|\sum_{i=1}^6 z_i\Phi_w(x_i)\|_2^2]
=
0.
\]
Since $H\succeq0$, this implies $\lambda_{\min}(H)=0$. Thus no positive universal lower bound depending only on $\Delta_-$ can hold, even when $n=6$ and $\Delta_-=1$.

This obstruction is robust and is not merely an artifact of exact antipodality. For $0<\varepsilon<\pi/6$, rotate the three negative vertices by angle $\varepsilon$ and set
\[
X_\varepsilon
:=
(u_1,u_2,u_3,-R_\varepsilon u_1,-R_\varepsilon u_2,-R_\varepsilon u_3).
\]
The perturbed configuration has no antipodal pair and satisfies
\[
\Delta_-(X_\varepsilon)
=
2\sin(\pi/6-\varepsilon/2)
\longrightarrow
1.
\]
The entries of $H(X_\varepsilon)$ depend continuously on the pairwise inner products, so $\lambda_{\min}(H(X_\varepsilon))\to0$. Consequently, merely excluding exact antipodal pairs still does not yield a positive lower bound in terms of $\Delta_-$ alone.

The upper-bound construction is unaffected. Define
\[
\Delta_+:=\min_{i\neq j}\|x_i+x_j\|_2,
\qquad
\Delta_\pm=\min\{\Delta_-,\Delta_+\}.
\]
For unit vectors,
$
\|x_i+x_j\|_2^2-\|x_i-x_j\|_2^2
=
4x_i^\top x_j.
$
Hence, if all pairwise inner products are nonnegative, then $\Delta_+\geq\Delta_-$ and therefore $\Delta_\pm=\Delta_-$. In particular, the spherical-cap construction above satisfies $x_i^\top x_j\geq1/2$, so its ordinary and projective separations coincide and it gives
\[
\lambda_{\min}(H)
\lesssim
\frac{\Delta_-}{\sqrt{\log n}}.
\]
Under the additional assumption $x_i^\top x_j\geq0$ for all $i\neq j$, the universal lower bound also transfers verbatim from $\Delta_\pm$ to $\Delta_-$. More generally, if $\Delta_+\geq c\Delta_-$ for some $c>0$, then $\Delta_\pm\geq\min\{1,c\}\Delta_-$ and the lower bound transfers with the corresponding constant loss.

\section*{Acknowledgment}

The AI tools such as codex 5.6 and Claude code 5.5 Fable are used for grammar checking and language editing.


\bibliographystyle{alpha}
\bibliography{ref}

\begin{thebibliography}{KMM24}

\bibitem[AZLS19]{als19}
Zeyuan Allen-Zhu, Yuanzhi Li, and Zhao Song.
\newblock A convergence theory for deep learning via over-parameterization.
\newblock In {\em Proceedings of the 36th International Conference on Machine
  Learning}, volume~97 of {\em Proceedings of Machine Learning Research}, pages
  242--252. PMLR, 2019.

\bibitem[Cha68]{cha68}
K.~Chandrasekharan.
\newblock {\em Introduction to Analytic Number Theory}, volume 148 of {\em
  Grundlehren der mathematischen Wissenschaften}.
\newblock Springer-Verlag, Berlin, Heidelberg, 1968.

\bibitem[DZPS19]{dzps19}
Simon~S. Du, Xiyu Zhai, Barnab{\'a}s P{\'o}czos, and Aarti Singh.
\newblock Gradient descent provably optimizes over-parameterized neural
  networks.
\newblock In {\em International Conference on Learning Representations}, 2019.

\bibitem[HJ12]{hj12}
Roger~A. Horn and Charles~R. Johnson.
\newblock {\em Matrix Analysis}.
\newblock Cambridge University Press, Cambridge, 2 edition, 2012.

\bibitem[JGH18]{jgh18}
Arthur Jacot, Franck Gabriel, and Cl{\'e}ment Hongler.
\newblock Neural tangent kernel: Convergence and generalization in neural
  networks.
\newblock In {\em Advances in Neural Information Processing Systems},
  volume~31, pages 8571--8580, 2018.

\bibitem[KMM24]{kmm24}
Kedar Karhadkar, Michael Murray, and Guido Mont{\'u}far.
\newblock Bounds for the smallest eigenvalue of the {NTK} for arbitrary
  spherical data of arbitrary dimension.
\newblock In {\em Advances in Neural Information Processing Systems},
  volume~37, pages 138197--138249, 2024.

\bibitem[LMX25]{lmx25}
Xinliang Liu, Tong Mao, and Jinchao Xu.
\newblock Condition numbers and eigenvalue spectra of shallow networks on
  spheres, 2025.

\bibitem[OS20]{os20}
Samet Oymak and Mahdi Soltanolkotabi.
\newblock Toward moderate overparameterization: Global convergence guarantees
  for training shallow neural networks.
\newblock {\em IEEE Journal on Selected Areas in Information Theory},
  1(1):84--105, 2020.

\bibitem[PSG20]{psg20}
Abhishek Panigrahi, Abhishek Shetty, and Navin Goyal.
\newblock Effect of activation functions on the training of overparametrized
  neural nets.
\newblock In {\em International Conference on Learning Representations}, 2020.

\bibitem[Son19]{son19}
Zhao Song.
\newblock {\em Matrix Theory: Optimization, Concentration and Algorithms}.
\newblock PhD thesis, The University of Texas at Austin, August 2019.

\end{thebibliography}
\end{document}